\DeclareMathOperator{\reals}{\mathbb{R}}
\DeclareMathOperator*{\argmin}{\arg\!\min}
\DeclareMathOperator{\var}{var}
\DeclareMathOperator{\tr}{tr}
\newtheorem{lemma}{Lemma}
\newtheorem{cor}{Corollary}
\newtheorem{rmk}{Remark}
\title{\Large \bf
On-Demand Trajectory Predictions For \\ Interaction Aware Highway Driving
}
\author{Cyrus~Anderson$^{1}$, Ram~Vasudevan$^{2}$, and Matthew~Johnson-Roberson$^{3}$
\thanks{This work was supported by a grant from Ford Motor Company via the Ford-UM Alliance under award N022884.}
\thanks{$^1$C. Anderson is with the Robotics Institute, University of Michigan, Ann Arbor, MI 48109 USA {\tt\footnotesize andersct@umich.edu}}%
\thanks{$^2$M. Johnson-Roberson is with the Department of Naval Architecture and Marine Engineering, University of Michigan, Ann Arbor, MI 48109 USA {\tt\footnotesize mattjr@umich.edu}}%
\thanks{$^3$R. Vasudevan is with the Department of Mechanical Engineering, University of Michigan, Ann Arbor, MI 48109 USA {\tt\footnotesize ramv@umich.edu}}%
}
\begin{document}
\maketitle

\begin{abstract}
Highway driving places significant demands on human drivers and autonomous vehicles (AVs) alike due to high speeds and the complex interactions in dense traffic.
Merging onto the highway poses additional challenges by limiting the amount of time available for decision-making.
Predicting others' trajectories accurately and quickly is crucial to safely executing maneuvers.
Many existing prediction methods based on neural networks have focused on modeling interactions to achieve better accuracy while assuming the existence of observation windows over \SI{3}{s} long.
This paper proposes a novel probabilistic model for trajectory prediction that performs competitively with as little as \SI{400}{\milli\second} of observations.
The proposed model extends a deterministic car-following model to the probabilistic setting by treating model parameters as unknown random variables and introducing regularization terms.
A realtime inference procedure is derived to estimate the parameters from observations in this new model.
Experiments on dense traffic in the NGSIM dataset demonstrate that the proposed method achieves state-of-the-art performance with both highly constrained and more traditional observation windows.
\end{abstract}

\begin{IEEEkeywords}
Autonomous Vehicle Navigation, Autonomous Agents
\end{IEEEkeywords}

\section{Introduction}

Merging in dense traffic necessitates cooperating with other drivers.
Successful cooperation in turn relies on predicting others' actions.
Predicting a vehicle's trajectory, however, is complicated by its possible interactions with surrounding vehicles~\cite{lawitzky2013interactive,lefevre2014survey}.
Recent works based on deep neural networks (DNNs) have proven effective at modeling these interactions~\cite{kim2017nn,xin2018nn,deo2018nn,hu2018nn,chandra2019traphic,gupta2018socialgan,ma2019trafficpredict,zhao2019multi,lee2017desire}.
These methods utilize a fixed number of observations of surrounding vehicles to infer which trajectories are likely.
A drawback to this is the duration of time needed to collect observations before making predictions, ranging from \SI{3}{\second} up to \SI{5}{\second}.
This window of time determines the minimum delay between first seeing a vehicle and predicting its trajectory.
While observation windows of \SI{3}{\second} may be tenable in low speed environments, the high speeds in highway driving call for faster reaction times.
Occlusions and sensor limitations such as maximum range also impact the quality of any observations of the surrounding vehicles.
This motivates the need for predictions that can be made in short time and with few observations.
In this work we aim to strike a balance between the richness of interactions modeled and the number of observations needed to make predictions.
Merging onto the highway in particular poses a challenge to autonomous vehicles (AVs).
In addition to the delay induced by observation windows, limited ramp length will further restrict the time available to make decisions.
Due to the heightened need of fast reaction times when merging onto the highway, we focus on these scenarios.
\begin{figure}[t]
  \centering
  \includegraphics[width=0.45\textwidth]{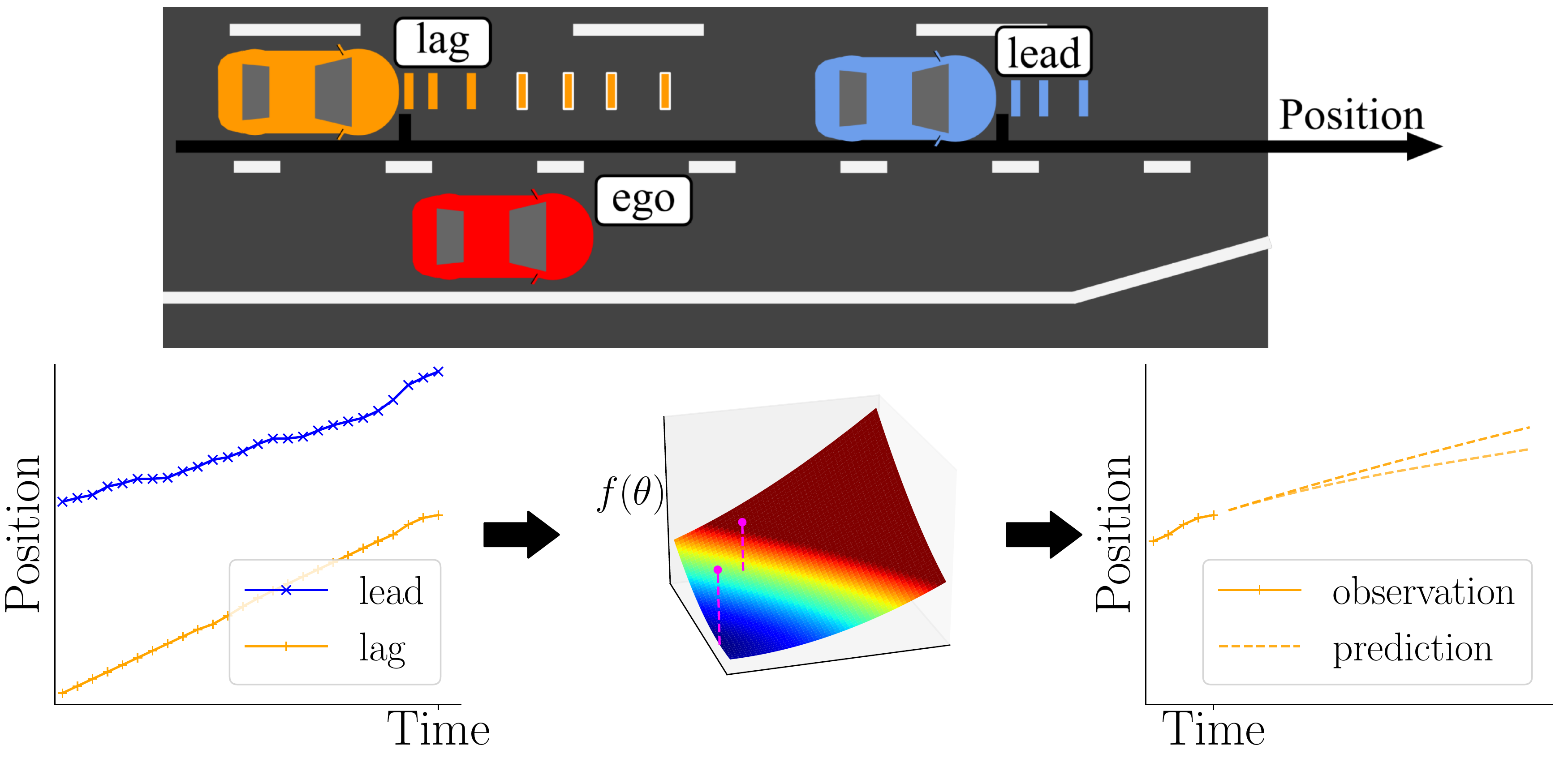}
  \caption{Overview of prediction method. The ego vehicle (red) seeks to predict the trajectories of the front and rear (lead and lag) vehicles to enable a safe merge between them. Observations of both vehicles (blue and orange marks) are used to define a likelihood function over possible controllers $\theta$ for the lag vehicle. Solving a convex problem yields an estimate that is used to sample realistic trajectories.}
  \label{fig:ramp_merging}
\end{figure}
The main contributions of this work are:
\begin{enumerate}
	\item a novel probabilistic highway interaction model;
	\item a realtime and consistent inference procedure;
	\item evaluation on merge scenarios in the real-world NGSIM dense highway traffic dataset~\cite{ngsim}.
\end{enumerate}
The proposed method achieves state-of-the-art performance with both highly constrained observation windows of \SI{400}{\milli\second} and more traditional observation windows.
We first extend the deterministic car-following model proposed by Wei et al.~\cite{wei2013auto} to the probabilistic setting.
Instead of choosing model parameters by hand, we treat them as unknown random variables and estimate them from observed velocities of the lead and lag vehicles depicted in Figure~\ref{fig:ramp_merging}.
Though the resulting estimation problem is nonconvex, we prove that it is equivalent to a semidefinite program and solve it in realtime with an off-the-shelf solver.
The estimate for the model parameters is then used to sample realistic trajectories.

The paper is organized as follows. Section~\ref{sec:related_works} describes related works in interaction-based trajectory prediction for traffic participants in general scenarios and those focused on ramp merging. Section~\ref{sec:methods} describes the interaction model we use to make predictions and the inference procedure used to determine the probabilities of different outcomes. We evaluate our model on the NGSIM dataset and perform an ablation study in Section~\ref{sec:experiments} before concluding in Section~\ref{sec:conclusion}.

\section{Related Work} \label{sec:related_works}

We first describe prediction methods that may operate on highly restricted observation windows, but do not take into account interactions. Methods that focus on modeling interactions, but operate on longer observation windows, are described in the next section. In the last section we describe methods designed specifically to account for the interactions and restricted observation windows in ramp merging scenarios. 

\subsection{Single Agent Prediction}
Classical methods specify a simple kinematic model to predict trajectories, such as constant velocity or constant yaw rate and acceleration~\cite{berthelot2011handling}.
Other methods have employed learning based approaches such as Gaussian mixture models~\cite{wiest2012probabilistic}, or hidden Markov models to ensemble simple kinematic models~\cite{kaempchen2004imm}.
More similar to the method proposed in this paper, Houenou et al.~\cite{houenou2013vehicle} combine predictions from a simple kinematic model and weight these with penalty terms on future accelerations.
These methods, however, do not account for interactions between different vehicles on the road. This can lead to inconsistent predictions in common scenarios such as a fast vehicle needing to slow down for a vehicle in front.

\subsection{General Interaction-Based Trajectory Prediction}
More recently, DNNs have been used to model the interactions between multiple vehicles.
Early works account for interaction but do not make probabilistic predictions~\cite{lee2017desire,xin2018nn}.
Treating all other agents as obstacles and predicting occupancy grids offers another approach, but this loses individual tracking labels and is limited by coarse grid size~\cite{kim2017nn}.
Directly predicting the parameters of a known distribution has been employed in most probabilistic methods, using a bivariate normal distribution~\cite{deo2018nn,hu2018nn,chandra2019traphic, ma2019trafficpredict}.
The works of~\cite{deo2018nn,hu2018nn}, however, require additional labels for maneuver types.
Another method, Traphic~\cite{chandra2019traphic}, requires no such labels, but performs a potentially slow social pooling operation for each agent at each timestep.
TrafficPredict~\cite{ma2019trafficpredict} uses an attention based mechanism to extract features for all pairwise interactions which scales poorly with the number of potentially interacting agents.
Recent works have avoided social pooling and attention mechanisms to reduce computational complexity~\cite{gupta2018socialgan,zhao2019multi}.
Social GAN~\cite{gupta2018socialgan} introduces a permutation invariant pooling layer to account for distant interactions while using a Generative Adversarial Network architecture to predict all timesteps in a single forward pass.
Multi-Agent Tensor Fusion~\cite{zhao2019multi} instead uses a global pooling layer to avoid pooling for each agent separately, as well as preserve spatial structure.
The method proposed in this paper does not account for lane changes, but for ramp merging scenarios most vehicles on the highway will cooperatively merge to an inner lane if at all merging, to avoid interacting with vehicles entering the highway~\cite{sarvi2007microsimulation,kondyli2011modeling}.

Other works not based on neural networks have relied on manually defined cost functions to specify vehicles' behavior~\cite{bahram2016combined}, or solving integer linear programs~\cite{deo2018gmm}.
These methods, however, have not performed as well as those based on neural networks.

\subsection{Ramp Merging Trajectory Prediction}
In this paper we focus on modeling the interactions between the lag vehicle and lead vehicle shown in Figure~\ref{fig:ramp_merging}.
A complementary body of research has instead focused on modeling the interactions between the lag vehicle and ego vehicle~\cite{wei2013auto,dong2018smooth,hubmann2018belief}.
In these works the goal is to infer whether or not the lag vehicle will yield to make space for the ego vehicle, or not yield. The lag vehicle is then modeled as following a controller specific to the yield intent.
The controllers have been modeled as Markov chains~\cite{dong2018smooth} and with the Intelligent Driver Model (IDM)~\cite{treiber2000congested} with manually chosen parameters~\cite{hubmann2018belief}.

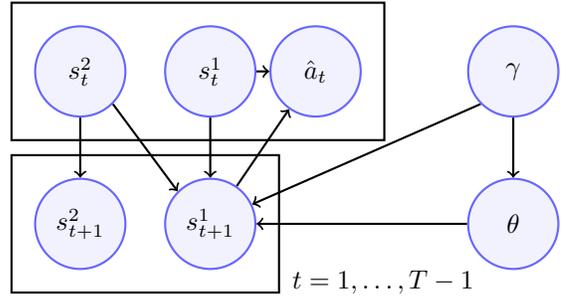
\begin{figure}[t!]
\centering
\begin{tikzpicture}[
roundnode/.style={circle, draw=blue!60, fill=blue!5, thick, minimum size=12mm, text centered},
]
\node[roundnode]      (theta)                              {$\theta$};
\node[roundnode]      (gamma) [above=0.8cm of theta]        {$\gamma$};
\node[roundnode]      (acc) [left=1.4cm of gamma]        {$\hat{a}_t$};
\node[roundnode]      (s1t) [left=2.8cm of gamma]          {$s^1_t$};
\node[roundnode]      (s1t1) [below=0.8cm of s1t]           {$s^1_{t+1}$};
\node[roundnode]      (s2t) [left=0.5cm of s1t]             {$s^2_t$};
\node[roundnode]      (s2t1) [below=0.8cm of s2t]           {$s^2_{t+1}$};

\node[draw,solid,fit=(s1t) (s2t) (acc), inner sep=0.3cm, thick] {};
\node[draw,solid,fit=(s1t1) (s2t1), inner sep=0.3cm, thick,label={[label distance=0.4mm, text=black]345:$t=1,\dots,T-1$}] {};

\draw[->, thick] (gamma.-90) -- (theta.90);
\draw[->, thick] (theta.180) -- (s1t1.0);
\draw[->, thick] (s1t.-90) -- (s1t1.90);
\draw[->, thick] (s2t.-90) -- (s2t1.90);
\draw[->, thick] (s2t.-45) -- (s1t1.135);
\draw[->, thick] (gamma.225) -- (s1t1.25);
\draw[->, thick] (s1t.0) -- (acc.180);
\draw[->, thick] (s1t1.55) -- (acc.235);
\end{tikzpicture}
\caption{Proposed interaction model for predicting lag vehicle behavior. The lag vehicle state $s^1$ depends on the lead vehicle's state $s^2$, its own controller $\theta$, and hyperparameters $\gamma$. Noisy estimates of lag vehicle acceleration $\hat{a}_t$ are calculated from state measurements.}
\label{fig:graphical_model}
\end{figure}

\section{On-Demand Trajectory Predictions} \label{sec:methods}
Here we define our model of interactions between the lag and lead vehicles. We then show how this is used to predict trajectories.
Section~\ref{ssec:problem_statement} states the trajectory prediction problem in our probabilistic setting. Section~\ref{ssec:interaction_model} defines the interaction-based controller whose parameters we aim to estimate, and the dynamics of the system. The full probabilistic model with novel regularization terms is defined in Section~\ref{ssec:regularized_prediction}. The realtime inference procedure for predicting trajectories with this model is described in Section~\ref{ssec:efficient_sampling}.

\subsection{Problem Statement}\label{ssec:problem_statement}
We consider the ramp merging scenario in Figure~\ref{fig:ramp_merging} depicting the ego vehicle merging onto the highway.
To enable safe merging we are interested in predicting the longitudinal positions of the two (lead, lag) vehicles in the target lane.
Let $s^i_t = (x^i_t~v^i_t)^\intercal \in \reals^2$ denote vehicle state, consisting of longitudinal position $x^i_t$ and velocity $v^i_t$ at timestep $t$. We denote the state of the lag vehicle by $s^1_t$ and state of the lead vehicle by $s^2_t$.
We observe the state of both vehicles $s_t = (s^{1\intercal}_t~s^{2\intercal}_t)^\intercal$ at timesteps $t=1,...,k$ and predict $x^1_t$ until the final timestep, $t=k+1,...,T$.
Additionally we will use the subscript notation $i:j$ to refer to the set of variables indexed by $i,i+1,\dots,j$.
In making probabilistic predictions this amounts to sampling
\begin{align}
    x^1_{k+1:T} \sim p(x^1_{k+1:T}|s_{1:k}).
    \label{eqn:prediction_problem}
\end{align}
We now explain our focus on the lag vehicle.
With several assumptions summarized in the graphical model shown in Figure~\ref{fig:graphical_model}, we decompose the joint prediction of both vehicles into two parts.
The first part predicts the lead vehicle's trajectory and the second predicts the lag vehicle's trajectory conditioned on that of the lead vehicle.
We start with the assumption of not having observations for the vehicle in front of the lead vehicle.
We model the lag vehicle behavior as dependent on the state of the lead vehicle, yet we do not model the state of the lead vehicle as dependent on its own lead.
One reason for this inconsistency is that occlusions and limited sensor range may prevent us from obtaining such observations. Aside from this, the proposed model could be extended to account for the lead vehicle's own lead by repeated decomposition, but there is a point at which we cannot see further vehicles ahead. We thus present the simplest model here.
Next, assume that the lead vehicle's actions do not depend on the lag vehicle's state as in common car-following models.
Furthermore, assume measurements of each vehicle's position and velocity are noiseless, while the acceleration measurement $\hat{a}_t$ of the lag vehicle has zero mean Gaussian noise with a small variance $\sigma_a^2$. This is reasonable when the former measurements have low variances but acceleration is approximated from velocity via finite differences. For example, given velocity measurements with variance $\sigma_v^2$ and timestep size $\Delta t$, acceleration then has variance $\var(\hat{a}_t) = \var(v_{t+1}-v_t)/\Delta t^2 = 2\sigma_v^2/\Delta t^2$. The small timestep will magnify the variance as in the NGSIM dataset.
Using the independence assumptions in graphical model shown in Figure~\ref{fig:graphical_model} we may write
\begin{equation}
\begin{split}
    & p(s_{k+1:T}|s_{1:k}) \\
    &= p(s^1_{k+1:T}|s_{1:k},s^2_{k+1:T})p(s^2_{k+1:T}|s_{1:k})  \\
    &= p(s^1_{k+1:T}|s_{1:k},s^2_{k+1:T})p(s^2_{k+1:T}|s^2_{1:k}) \end{split}
\end{equation}
which provides the problem decomposition. Throughout the remainder of this paper we focus on the prediction problem for the lag vehicle posed as
\begin{align} \label{eqn:prediction_problem_lag}
    s^1_{k+1:T} \sim p(s^1_{k+1:T}|s_{1:k},s^2_{k+1:T})
\end{align}
from which we obtain the predicted positions.

\subsection{Interaction Model}\label{ssec:interaction_model}
Here we describe the controller used to model the interactions between the lead and lag vehicle.
The controller is based on balancing two goals. The first is to match the speed of the lead vehicle, and the second is to maintain a desired gap to the lead vehicle.
Let $g_t$ denote the current gap between the lead and lag vehicles. This gap is calculated from their positions as $g_t = x^2_t - x^1_t - l$, where $l$ is the length of the lead vehicle. We denote the desired gap by $g_*$.
Denoting $k_v$ and $k_g$ as the proportional control gains for the desired speed and desired gap, respectively, we denote the parameters that define this controller by $\theta=(k_v~k_g~g_*)$.
The controller proposed in~\cite{wei2013auto} sets the lag vehicle's acceleration with
\begin{align} \label{eqn:controller}
    h(s_t,\theta) = k_v(v^2_t-v^1_t) + k_g(g_t-g_*)
\end{align}
according to a manually chosen $\theta$. In this work we treat $\theta$ as unknown and our main focus is to estimate it.
We assume that the parameters are nonnegative, hence $\theta \in \reals_+^3$.
We will use the notation $0_{m \times n}$ to denote the matrix of zeros with $m$ rows and $n$ columns.
Given current state of the lag vehicle and controller parameters $\theta$ the next state is given by
\begin{equation}
    s^1_{t+1} = Cs^1_t + \begin{pmatrix} \Delta t^2/2\\\Delta t \end{pmatrix} h(s_t,\theta),
\end{equation}
where
\begin{equation}
    C = \begin{pmatrix} 1&\Delta t\\0&1 \end{pmatrix}.
\end{equation}
Given the lead vehicle's states we write the entire system dynamics as
\begin{equation}
    \begin{pmatrix} s_t\\1 \end{pmatrix} =
    \begin{pmatrix} A(\theta)\\0_{2\times5}\\0_{1\times4}~1 \end{pmatrix}
    \begin{pmatrix} s_{t-1}\\1 \end{pmatrix} +
    \begin{pmatrix} 0\\0\\s^2_t\\0 \end{pmatrix},
    \label{eqn:dynamics}
\end{equation}
where
\begin{equation}
    A(\theta) = \begin{pmatrix} C&0_{2\times3} \end{pmatrix} + \begin{pmatrix} \Delta t^2/2\\\Delta t \end{pmatrix} \begin{pmatrix} -k_g\\-k_v\\k_g\\k_v\\-k_g(l+g_*) \end{pmatrix}^\intercal.
\end{equation}

\subsection{Regularized Prediction}\label{ssec:regularized_prediction}
The difficulty in sampling trajectories in~\eqref{eqn:prediction_problem_lag} stems from not knowing the lag vehicle's controller $\theta$.
Direct estimation of $\theta$ can assign significant probability to controllers that produce unrealistic behaviors.
In this section we define regularization terms to promote more realistic behaviors. 
We can express the distribution of trajectories in~\eqref{eqn:prediction_problem_lag} given the known hyperparameters $\gamma$, which we define later, as
\begin{align}
\begin{split}
    & p(s^1_{k+1:T}|s_{1:k},s^2_{k+1:T},\gamma) \\
    &= \int_{\reals_+^3} p(s^1_{k+1:T},\theta|s_{1:k},s^2_{k+1:T},\gamma) d\theta \\
    &= \int_{\reals_+^3} p(s^1_{k+1:T}|s_{1:k},s^2_{k+1:T},\theta,\gamma)
    p(\theta|s_{1:k},s^2_{k+1:T},\gamma) d\theta \\
    &= \int_{\reals_+^3} p(s^1_{k+1:T}|s_{1:k},s^2_{k+1:T},\theta,\gamma)
    p(\theta|s_{1:k},\gamma) d\theta, \end{split}
    \label{eqn:regpred0}
\end{align}
where the last equality follows from the conditional independence expressed in Figure~\ref{fig:graphical_model}.
We begin by defining the second term in~\eqref{eqn:regpred0} which can be written using Bayes' rule and calculating the accelerations derived from velocities as
\begin{align} \label{eqn:regpred_term2}
    p(\theta|s_{1:k},\gamma) \propto p(\hat{a}_{1:k-1},s_{2:k}|s_1,\theta,\gamma)p(\theta|s_1,\gamma).
\end{align}
For the first term in~\eqref{eqn:regpred_term2} we impose a recursive structure independent of $\gamma$ to mirror standard Markov chain structure as
\begin{align}
    p(\hat{a}_{1:k-1},s_{2:k}|s_1,\theta,\gamma) &=
    \prod_{i=1}^{k-1}p(\hat{a}_i,s_{i+1}|s_i,\theta),
\end{align}
which combined with the system dynamics in~\eqref{eqn:dynamics} and the assumed Gaussian noise for acceleration yields
\begin{align} \begin{split}
    & p(\hat{a}_{1:k-1},s_{2:k}|s_1,\theta,\gamma) = \prod_{i=1}^{k-1}p(\hat{a}_i|h(s_i,\theta)) =\\
    &= \prod_{i=1}^{k-1}\mathcal{N}(\hat{a}_i;h(s_i,\theta),\sigma_a^2).
     \end{split}\label{eqn:model_def1}
\end{align}
Using this term only, we could estimate $\theta$ that fits the observed data.
We now introduce the hyperparameters to address weaknesses of this initial model.
One problem is an occasionally large and unrealistic estimate of the desired gap $g_*$.
For this we regularize $g_*$ to be close to a given mean gap $g_0$.
Additionally, the car-following model was originally designed for dense traffic where the lead vehicle is near the lag vehicle.
The model is thus vulnerable to overfitting the lag vehicle's behavior to a distant lead vehicle's accelerations.
To address this, we regularize the proportional parameters to zero more as the distance between the lead and lag vehicles increases.
We introduce the scalars $\alpha,\beta$ to control the precision of the normal priors placed on the desired gap and proportional parameters.
Letting $\gamma = (\alpha, g_0, \beta)$, we define the second term in~\eqref{eqn:regpred_term2} as
\begin{align} \label{eqn:model_def2}
    -\log p(\theta|s_1,\gamma) = \alpha(g_*-g_0)^2 + \beta g_0^2(k_v^2 + k_g^2).
\end{align}
We now define the first term in~\eqref{eqn:regpred0} to regularize the future behavior of the controller, in contrast to \eqref{eqn:model_def1} which focuses on the fit to observations. Let $\chi_{\{v \preceq 0\}}(v)$ denote the characteristic function which equals zero for the real vector $v$ having all positive components and equals infinity elsewhere. The negative log-likelihood is defined to be
\begin{align} \label{eqn:model_def3}
    -\log p(s^1_{k+1:T}|s_{1:k},s^2_{k+1:T},\theta,\gamma) =
    \chi_{\{v \preceq 0\}}(v),
\end{align}
where $v=(v^1_{k+1},...,v^1_{T})^\intercal$.
Collecting the likelihoods specified in~(\ref{eqn:model_def1})-(\ref{eqn:model_def3}), the negative log-likelihood for a given $\theta$ in~\eqref{eqn:regpred0} is
\begin{align} \label{eqn:full_nll}
\begin{split}
    f(\theta) = \frac{1}{2\sigma_a^2}\sum_{i=1}^{k-1}(\hat{a}_i-h(s_i,\theta))^2 + \alpha(g_*-g_0)^2 +\\
    + \beta g_0^2 (k_v^2 + k_g^2) + \chi_{\{v \preceq 0\}}(v). &
\end{split}
\end{align}
The $\sigma_a^2$ term contributes only to the weighting of the data fit term relative to the regularization terms. We thus may set $\sigma_a^2 = 1$ for convenience and determine the other hyperparameters relative to this.
Given parameters $\theta'$ we can obtain a predicted trajectory via the dynamics given in equation~\eqref{eqn:dynamics}. This trajectory has exact probability equal to
\begin{equation}
    \frac{\exp{(-f(\theta'))}}{\int_{\reals_+^3} \exp{(-f(\theta))} d\theta}
\end{equation}
under the model. Calculating this probability by numerical integration, however, is problematic due to the large number of function evaluations.
Sampling can be used to obtain a consistent approximation, but we cannot sample $\theta$ directly from $\exp{(-f(\theta))}$ because it does not correspond to a distribution for which efficient samplers exist.
In the next section we construct a consistent and efficient sampler for the likelihood specified by~\eqref{eqn:full_nll}.

\begin{figure}[t]
  \centering
  \includegraphics[width=0.45\textwidth]{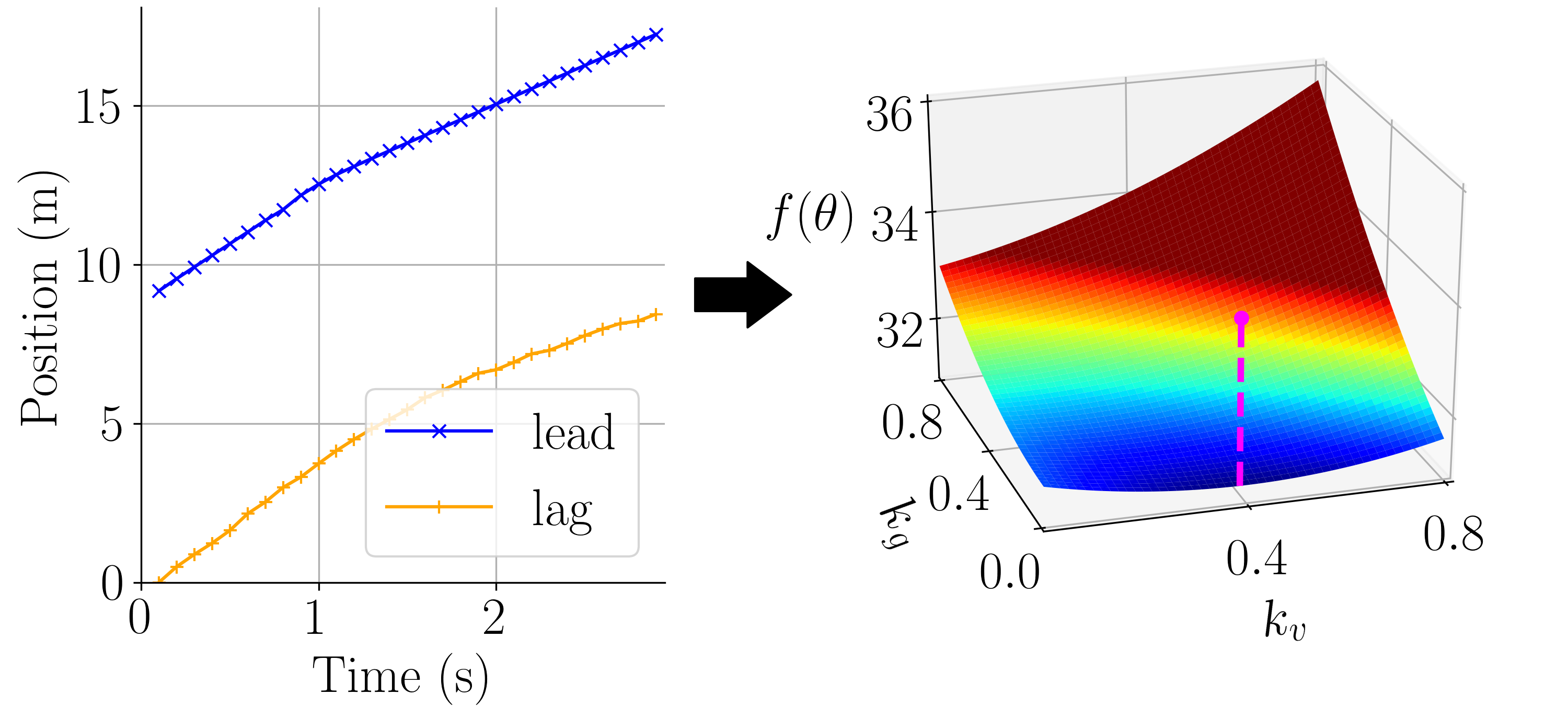}
  \caption{Likelihood surface of merging scenario. The projection $f(\theta)$ onto $k_v$ and $k_g$ appears smooth and unimodal along these parameters. The proposed inference procedure finds the global minimum (magenta) at $\hat{\theta}=(0.39, 0, 8.63)$.}
  \label{fig:surface_fig}
\end{figure}

\subsection{Efficient Sampling}\label{ssec:efficient_sampling}

Our approach to sampling from~\eqref{eqn:full_nll} has two main steps.
We first solve an optimization problem to find a set of parameters $\hat{\theta}$ that has high likelihood. We then employ importance sampling to sample from~\eqref{eqn:full_nll}.

\subsubsection{Importance Sampling Given $\hat{\theta}$ }
We sample parameters via $\theta \sim q(\theta;\hat{\theta})$ in the higher likelihood region around $\hat{\theta}$ where $q$ is a distribution chosen to have support over $\reals_+^3$ and admit efficient samplers. These samples are then weighted with their importance weights
\begin{equation}
    w = \frac{\exp{(-f(\theta))}}{q(\theta;\hat{\theta})}
\end{equation}
and normalized by the sum of the weights to complete the importance sampling.
Since $q$ has support over $\reals_+^3$, the importance sampling is consistent, and depending on the choice of $q$ this procedure may also be efficient for sampling all the high-likelihood $\theta$. Figure~\ref{fig:surface_fig} shows a typical cost surface of $f(\theta)$ using observations from NGSIM. The smooth surface and unimodality admit efficient sampling.

\subsubsection{Optimizing to Find $\hat{\theta}$}
There are multiple possible optimization problems that could be solved to find a high-likelihood $\hat{\theta}$ from $f(\theta)$. Minimizing $f(\theta)$ over $\theta \in \reals_+^3$ directly is one choice but we see that $\forall t>k ~ s_{t+1}$ depends on $\theta$ through both $A(\theta)$ and $s_t$ in~\eqref{eqn:dynamics} since we only observe up to timestep $k$.
This implies that including the future behavior regularizer~\eqref{eqn:model_def3} produces a nonconvex problem. To ensure our predictions can be made in realtime, we instead optimize over all other terms in $f(\theta)$.
Optimizing over the chosen terms yields
\begin{align}
    \begin{split}
    & \hat{\theta} = \argmin_{\theta \in \reals_+^3} \frac{1}{2} \sum_{i=1}^{k-1}(k_v(v^2_i-v^1_i) + k_g(g_i-g_*) - \hat{a}_i)^2 +\\
    &\hspace{30mm}+ \alpha(g_*-g_0)^2 + \beta g_0^2(k_v^2 + k_g^2)
    \end{split} \\
    &= \argmin_{\theta \in \reals_+^3} \frac{1}{2}||D\begin{pmatrix}k_v\\k_g\\g_*\\k_gg_*\end{pmatrix} - b||_2^2 \\
    &= \argmin_{\theta \in \reals_+^3, u \in \reals_{+}} \frac{1}{2}||D\begin{pmatrix}k_v\\k_g\\g_*\\u\end{pmatrix} - b||_2^2 ~~ \mathrm{s.t.}~~ k_gg_* = u
    \tag{NC} \label{eqn:opt_1}
\end{align}
where we collect the variables and write the optimization as a least squares problem with the rewritten known terms being $D \in \reals^{k+2,4}$ and $b \in \reals^{k+2}$. For the last equality we use a nonconvex quadratic constraint with a dummy variable to make the vector of decision variables linear.
Let $x = (\theta; u) \in \reals_+^4$.
We also rewrite the nonconvex constraint $k_gg_* = u$ in terms of $x$. Let $E \in \mathbb{S}^4$ and $c \in \reals^4$ be such that
\begin{equation} \label{eqn:r_constraint}
    r(x) = x^\intercal Ex + c^\intercal x = k_gg_* - u,
\end{equation}
whereby the constraint may be written as $r(x) = 0$.
To find an approximate solution to~\eqref{eqn:opt_1}, we remove the constraint $x\succeq0$, denoting the new problem (NC1).
By Lemma~\ref{lemma:eq} in the Appendix, solving the convex relaxation of (NC1) for $x$ given by
\begin{equation} \tag{P} \label{eqn:opt_2}
\begin{aligned}
    \underset{\substack{
        X \in \mathbb{S}^4 \\
        x \in \reals^4
    }}{\mathrm{minimize}}&
    ~~ \frac{1}{2}\tr(D^\intercal DX) - b^\intercal Dx + \frac{1}{2}b^\intercal b& \\
    \mathrm{s.t.}&~~ X \succeq xx^\intercal\\
    &~~ \tr(EX) + c^\intercal x = 0&
\end{aligned}
\end{equation}
is equivalent to solving (NC1) for $x$. The nonconvex constraint with $r(x)$ has been replaced by convex constraints.
Provided the solution satisfies $x\succeq0$, it is also the global minimum of~\eqref{eqn:opt_1}.
%
%
Moreover, the $\hat{\theta}$ found from the solution is the global minimum of the entire negative log-likelihood~\eqref{eqn:full_nll} whenever it satisfies the nonconvex constraint on future velocities~\eqref{eqn:model_def3}.
To approximate~\eqref{eqn:opt_1}, we thus solve~\eqref{eqn:opt_2} with the additional constraint $x\succeq0$, denoted (P1).
The proposed method for sampling trajectories is summarized in Algorithm~\ref{algo:sampler}.
Lemma~\ref{lemma:eq} also tells us that we need sufficiently many observations to ensure $D$ is full rank. This occurs at a minimum of two observations, and in practice we find four to be sufficient.


\begin{algorithm}[t!]
\SetAlgoLined
\KwIn{$s_{1:k}$, $s^2_{k+1:T}$, $\gamma$, $n$}  
\KwOut{$s^{1, (i)}_{k+1:T}, p^{(i)}$, for $i=1,\dots,n$}
Solve convex problem (P1) for $\hat{\theta}$  \\
\ForEach{$i = 1,...,n$}{
  Sample $\theta_i \sim q(\theta;\hat{\theta})$  \\
  Generate $s^{1, (i)}_{k+1:T}$ via~\eqref{eqn:dynamics}  \\
  $w_i \leftarrow \exp{(-f(\theta_i, s_{1:k}, s^{1, (i)}_{k+1:T}, s^2_{k+1:T}, \gamma))}/q(\theta_i;\hat{\theta})$ via~\eqref{eqn:full_nll} \\
}
$\forall i=1,\dots,n ~~ p^{(i)} \leftarrow w_i/\sum_{i=1}^n w_i$ \\
\caption{Probabilistic Trajectory Prediction for Ramp Merging}
\label{algo:sampler}
\end{algorithm}

\section{Experiments} \label{sec:experiments}

To evaluate the proposed method's ability to predict trajectories in dense traffic for ramp merging, we test it on the NGSIM dataset~\cite{ngsim}. The NGSIM dataset includes full trajectory data recorded at \SI{10}{\Hz} for two highways, I-80 and US-101, during peak usage.
Since our focus is ramp merging for AVs, we extract relevant pairs of lead and lag vehicles. These pairs are those between which a vehicle entering the highway has merged, or the pair behind such a pair.
We are most interested in predicting the behavior of the lag vehicle at the most crucial moment--when it can see the potentially merging ego vehicle.
For each pair the start of the prediction window $t=k+1$ begins when the merging vehicle first passes the lag vehicle.
The end of the prediction window $t=T$ occurs either when the ego vehicle passes the lead vehicle or first enters the target lane in case of a merge.
All pairs are observed for \SI{3.2}{\second} before the prediction window.
This choice of observation window allows us to compare to methods that use more traditional window lengths.
We extract 420 pairs from the I-80 data and 292 pairs from the US-101 data.

\begin{table*}[h]
\centering
\caption{Predictive performance of each method for the NGSIM dataset (best in \textbf{bold} and second best \underline{underlined}). Average distance error (ADE) and root mean squared error (RMSE) are shown as ADE/RMSE in meters. The proposed method achieves the lowest error for short-term predictions, and outperforms the DNNs when the observations are limited to nearby vehicles.}
\begin{tabular}{ |c|c|c|c||c|c|c|c|c| }
 \hline
    & \multicolumn{8}{c|}{\SI{3.2}{\second} observed} \\
 \hline
    & \multicolumn{3}{c||}{Extra information} & & & & & \\
  t (\si{\second})  & IDM   & SGAN*   & MATF*   & CV   & SGAN   & MATF   & Proposed-NR   & Proposed  \\
 \hline
   0.8   & 0.65/0.89 & 0.46/0.75    & 0.44/\underline{0.68}  & 0.67/0.92  & 0.67/0.96 & 0.49/0.75 & \underline{0.37}/\textbf{0.60} & \textbf{0.33}/\textbf{0.60} \\
   1.6   & 1.95/2.60 & 1.10/1.63   & \underline{1.00}/\textbf{1.44}  & 1.47/1.97    & 1.47/2.01 & 1.20/1.73 & 1.08/1.56 & \textbf{0.95}/\underline{1.62} \\
   2.4   & 3.47/4.73 & 1.87/2.60  & \textbf{1.56}/\textbf{2.17}   & 2.34/3.42    & 2.34/3.13 & 2.08/2.93 & 1.99/2.77 & \underline{1.67}/\underline{2.47} \\
   3.2   & 4.73/6.20 & 2.78/3.71 & \textbf{2.04}/\textbf{2.81} & 3.42/4.44   & 3.35/4.39    & 3.01/4.24 & 3.01/4.14 & \underline{2.54}/\underline{3.61} \\
   4.0   & 5.57/7.29 & 3.81/4.99  & \textbf{2.67}/\textbf{3.60}  & 4.63/5.91 & 4.46/5.77    & 4.19/5.87 & 4.25/5.74 & \underline{3.54}/\underline{4.88} \\
   4.8   & 5.97/7.72 & 4.90/\underline{6.26}  & \textbf{3.22}/\textbf{4.34}  & 5.94/7.60  & 5.65/7.27    & 5.42/7.52 & 5.63/7.55 & \underline{4.67}/6.31 \\
\hline
\end{tabular}
\label{tab:traditional_test}
\end{table*}

\subsection{Model Specifications}
We set $g_0$ equal to the mean of the observed gaps. For the precision values, we found that values in $[0.5, 2]$ achieve a good balance between performance and probability calibration. Following this, we set $\alpha$ and $\beta$ to 1 for all experiments.
For importance sampling we define $q(\theta;\hat{\theta})$ as the multivariate normal distribution $\mathcal{N}(\hat{\theta}, I_3)$ truncated to $\reals_+^3$ and draw 1,000 samples. This variance was found to be sufficiently large to sample effectively. 
The convex problem (P1) is solved with CVXOPT v1.2.3~\cite{cvxopt}, an open-source solver for convex optimization.

\subsection{Baselines}
We compare to state-of-the-art methods for ramp merging and general highway prediction in addition to a simplified version of the proposed model:
\begin{itemize}
    \item \textbf{Constant Velocity (CV):} The average velocity is used to predict future positions.
    \item \textbf{IDM-based (IDM)}\cite{hubmann2018belief} \textbf{:} The IDM car-following model~\cite{treiber2000congested} is parameterized based on the identified lead vehicle. Unlike other methods it uses the future trajectories of the lead vehicle and the ego vehicle.
    \item \textbf{Social GAN (SGAN)}\cite{gupta2018socialgan} \textbf{:} Shown to achieve state-of-the-art performance on NGSIM when compared to other neural networks~\cite{chandra2019traphic} despite originally being designed for joint prediction of pedestrian trajectories.
    \item \textbf{Multi-Agent Tensor Fusion (MATF)}\cite{zhao2019multi} \textbf{:} Achieves state-of-the-art performance on NGSIM using a global pooling layer to capture distant interactions while maintaining spatial structure.
    \item \textbf{No Regularization (Proposed-NR):} The proposed method without regularization terms, corresponding to only the first term of~\eqref{eqn:full_nll}.
\end{itemize}
Each DNN is trained once on each highway dataset. For making predictions on a given scenario, the model that has not seen it during training is used to make predictions. To evaluate these probabilistic predictions from SGAN and MATF we draw 100 samples.

The proposed method uses observations for the lead and lag vehicles, but SGAN and MATF have traditionally been evaluated with observations for all vehicles on the freeway~\cite{chandra2019traphic,zhao2019multi}. For comparison we include this standard evaluation, denoted by SGAN* and MATF*.
In practice, however, we will accurately detect only nearby vehicles. To reflect this case, we evaluate SGAN and MATF with the same observations as the proposed method, augmented with observations of other vehicles we may reasonably detect from the viewpoint of the ego vehicle.
We add observations for three additional vehicles: the lead vehicle's lead, the ego vehicle, and the ego's own lead vehicle.

\subsection{Evaluation Metrics}
Let $\hat{x}_{i,t}$ be the random variable corresponding to the probabilistic prediction of the lag vehicle's longitudinal position at timestep $t$ in the $i$th scenario. The true position is denoted $x_{i,t}$.
Since the time horizon varies between scenarios, we denote $N_t$ as the number of scenarios with time horizon $T \geq t$.
To evaluate the accuracy of the probabilistic trajectory predictions we evaluate the following metrics:
\begin{itemize}
    \item \textit{Average Distance Error (ADE):} The expected distance between the prediction and the true position, used in~\cite{gupta2018socialgan,chandra2019traphic,zhao2019multi,ma2019trafficpredict}. ADE is calculated at timestep $t$ as:
    \begin{align*}
        ADE(t) = \frac{1}{N_t}\sum_{i=1}^{N_t} \mathbb{E}[\lvert x_{i,t} - \hat{x}_{i,t} \rvert]
    \end{align*}
    \item \textit{Root Mean Squared Error (RMSE):} The square root of expected squared error between the prediction and the true position, used in~\cite{hu2018nn,deo2018nn,xin2018nn,chandra2019traphic,zhao2019multi}:
    \begin{align*}
        RMSE(t) = \sqrt{\frac{1}{N_t}\sum_{i=1}^{N_t} \mathbb{E}[(x_{i,t} - \hat{x}_{i,t})^2]}
    \end{align*}
\end{itemize}
The ADE tells us how prediction errors are distributed on average. Predicting a distribution over positions that has a mean close to the actual position will result in lower error.
The RMSE is similar but assigns more weight to larger errors due to the squared term within the expectation.
A method with ADE lower than RMSE suggests that it predicts more extreme cases, but assigns lower probability to these.

\begin{figure*}[t]
  \centering
  \includegraphics[width=1\textwidth]{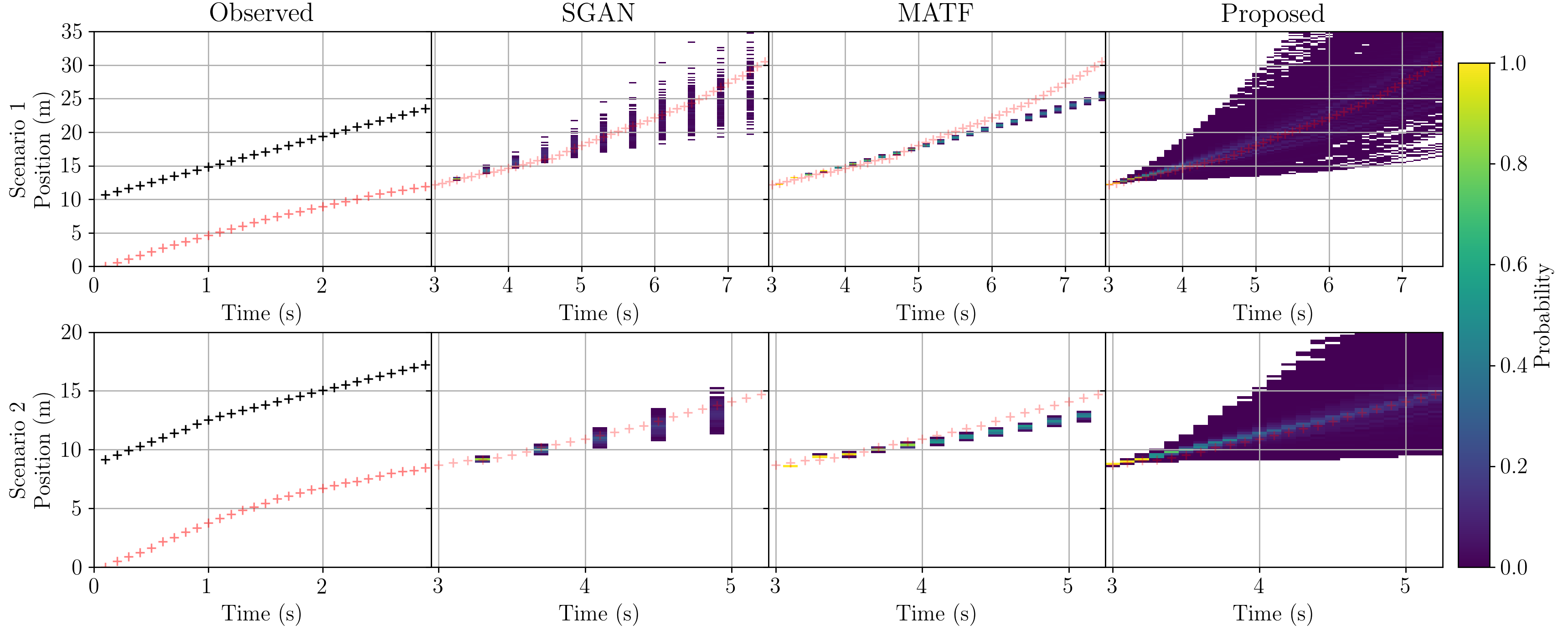}
  \caption{Predictions of each method on two ramp merging scenarios. The red crosses show the lag vehicle's position and the black crosses show the observed positions of its lead vehicle. The probabilistic predictions for each method are displayed after the end of the observation window. The color bar (right) provides the probability corresponding to each color. The proposed method predicts the lag vehicle's positions accurately despite using less information than the other interaction-based methods.}
  \label{fig:scenarios}
\end{figure*}

\subsection{Performance in Dense Traffic Scenarios}

The performance for each method is shown in Table~\ref{tab:traditional_test}.
The model-based method IDM makes overly conservative predictions about braking which hurt its performance. 
MATF* achieves the lowest errors for long-term predictions by utilizing positional information of all vehicles on the freeway. SGAN* also achieves low error with the same information.
Using the same DNNs to make predictions having observed only the more immediate vehicles, however, results in worse performance.
The proposed method is evaluated without knowledge of the lead's own lead vehicle, or the vehicles in the merge lane, yet still outperforms the DNNs.
The drop in DNN performance suggests that rather than learning to predict based on cues in nearby drivers' behavior, they have learned how traffic waves propagate along highways. 
Comparing the errors made in the short-term and the long-term, we observe that while the DNNs perform competitively with CV in the short-term, they are better at capturing behavior in the long-term.
Fitting the car-following model directly as Proposed-NR is competitive with MATF, though less so at longer-term predictions.
Adding the regularization terms enables the proposed method to outperform the baselines, excepting those with full observations of the freeway.
Even compared to these methods, the proposed method predicts the short-term with lower error. This makes sense if MATF* learned to focus on the long-term rather than the short-term, relying heavily on observations of vehicles much farther ahead.

\subsection{Performance with Limited Observations}
Previously we compared performance ensuring that each method had sufficiently many observations in each scenario.
For the real scenarios that AVs will encounter, however, we cannot guarantee that such time will be available.
Even within the NGSIM dataset alone, nearly $8\%$ of the scenarios having between \SI{400}{\milli\second} and \SI{3.2}{\second} of observations are removed from our evaluation to ensure traditional observation windows may be used.
In this section we make predictions on the same scenarios as before but limit ourselves to \SI{400}{\milli\second} of observations.
Both SGAN and MATF operate on downsampled data. SGAN operates at \SI{2.5}{\Hz}, so for the limited observation window of \SI{400}{\milli\second} it sees only one observation. To supply the method with the traditional \SI{3.2}{\second} of observations, we extrapolate using the constant velocity model from the \SI{400}{\milli\second} of original observations at \SI{10}{\Hz}. We also extrapolate for MATF which operates at \SI{5}{\Hz}.

The ADE and RMSE for a subset of the methods is shown in Table~\ref{tab:limited_observations}.
We see that the performance for both SGAN and MATF remains largely unchanged between using the traditional and limited observation windows. This has also been observed in pedestrian trajectory prediction~\cite{scholler2020constant} where only the first few observations were found to significantly affect predictive performance of a DNN.
The proposed method's performance decreases for the longer-term predictions, but still outperforms the baselines.

\begin{table}[h!]
\centering
\caption{ADE/RMSE with limited observations. CV predicts competitively in the short-term while MATF does so in the long-term. The proposed method retains its performance despite using a short observation window.}
\begin{tabular}{ |c|c|c|c|c| }
 \hline
    & \multicolumn{4}{c|}{\SI{400}{\milli\second} observed} \\
 \hline
  t (\si{\second})  & CV  & SGAN  & MATF  & Proposed \\
 \hline
   0.8   & \underline{0.43}/\underline{0.78} & 0.65/0.95 & 0.51/0.82 & \textbf{0.32}/\textbf{0.59} \\
   1.6   & \underline{1.13}/\underline{1.81} & 1.46/2.04 & 1.23/1.84 & \textbf{0.92}/\textbf{1.48} \\
   2.4   & \underline{2.01}/2.95 & 2.35/3.20 & 2.04/\underline{2.92} & \textbf{1.68}/\textbf{2.59} \\
   3.2   & 3.13/4.38 & 3.34/4.45 & \underline{2.97}/\underline{4.19} & \textbf{2.63}/\textbf{3.88} \\
   4.0   & 4.43/6.05 & 4.40/5.75 & \underline{4.09}/\underline{5.74} & \textbf{3.69}/\textbf{5.27} \\
   4.8   & 5.89/7.89 & 5.60/7.26 & \underline{5.15}/\underline{7.24} & \textbf{4.87}/\textbf{6.82} \\
\hline
\end{tabular}
\label{tab:limited_observations}
\end{table}

\begin{table}[h!]
\centering
\caption{Compute time and probability calibration. The proposed method with 1000 samples and CV predict in realtime, while SGAN and MATF with 100 samples do not. SGAN and the proposed method show calibrated probability estimates, while MATF and CV match in calibration.}
\begin{tabular}{ |c|c|c|c|c| } 
 \hline
    &  CV & SGAN & MATF & Proposed \\
 \hline
Compute time (\si{\second})       & 0.002 & 0.549 & 0.908 & 0.028 \\
Calibration    & 0.65 & 0.18   & 0.65  & 0.17  \\
\hline
\end{tabular}
\label{tab:calib}
\end{table}

\subsection{Speed}

Sudden and critical scenarios in autonomous driving demand methods that operate in realtime. Table~\ref{tab:calib} shows the time taken to make predictions for a single scenario.
SGAN and MATF are benchmarked on GTX 1080 GPU, and the other methods on Intel Core i7-6800K CPU at \SI{3.40}{\giga\Hz}. Although SGAN and MATF do not make realtime predictions with 100 samples, they could do so by reducing the number of samples. This would improve speed at the cost of less accurate probability estimates. The proposed method's low number of parameters aids in making realtime predictions despite using 1000 samples.

\subsection{Probability Calibration}

Probabilistic predictions attach a probability to each predicted outcome and enable planners to calculate risk. The calculated risk, however, depends on estimates of probability since the true distribution over future outcomes is unknown.
The constant velocity model can be viewed as a probabilistic method with degenerate probability estimates. It predicts a single outcome and assigns it full probability.
In some sense these probability estimates are not accurate, because vehicles often take different trajectories.
We measure this accuracy with a metric for the calibration of regression methods~\cite{kuleshov2018accurate}.
This measures the squared error between each confidence interval's probability and the empirical probability of outcomes within the interval being realized.
The calibration scores for each method in Table~\ref{tab:calib} mirror the predictive distributions in Figure~\ref{fig:scenarios}. SGAN and the proposed method have calibrated probability estimates, while MATF tends to underestimate the uncertainty in its predictions.

\section{Conclusion} \label{sec:conclusion}

We propose a novel probabilistic extension for a car-following model and introduce regularization terms to enforce realism in predicted behaviors. Through experiments we demonstrate that these terms lead to increased prediction accuracy for real ramp merging scenarios in dense traffic.
Comparing our model to existing methods on the NGSIM dataset shows that it achieves state-of-the-art performance. Furthermore, the proposed model maintains comparable performance when limited to using very few observations.
There are multiple limitations to the proposed model that provide grounds for future work.
The model considers interactions only between the lag and lead vehicles.
Combining this with approaches that consider interactions between the lag and ego vehicles~\cite{wei2013auto,dong2018smooth,hubmann2018belief} provides one direction for future work.
Accounting for lane changes provides another direction.




\section*{APPENDIX}

We aim to show that the nonconvex problem given in (NC1) is equivalent to the convex reformulation in~\eqref{eqn:opt_2}.
We first state the dual semidefinite program (SDP) of~\eqref{eqn:opt_2}:
\begin{equation} \tag{D} \label{eqn:opt_2_dual}
\begin{aligned}
    & \underset{\begin{subarray}{c}
        s, \mu \in \reals \\
    \end{subarray}}{\mathrm{maximize}}
    ~~ s & \\
    & \mathrm{s.t.}~~ \begin{pmatrix}\frac{1}{2}D^\intercal D + \mu E & -b^\intercal D + \mu c \\ (-b^\intercal D + \mu c)^\intercal & \frac{1}{2}b^\intercal b - s \end{pmatrix} \succeq 0 &
\end{aligned}
\end{equation}
We use the following special case of ~\cite[Theorem 6]{xia2016s}.
\begin{cor} \label{cor:one}
Let $r:\mathbb{R}^n\rightarrow\mathbb{R}$ be defined as \eqref{eqn:r_constraint}.
Suppose there exist vectors $x_1,x_2\in\reals^n$ such that $r(x_1) < 0 < r(x_2)$. If the nonconvex problem \textup{(NC1)} has value that is bounded below, the dual SDP~\eqref{eqn:opt_2_dual} always has an optimal solution $(s^*,\mu^*)$ with optimal value equal to the infimum of \textup{(NC1)}. Furthermore the infimum of \textup{(NC1)} is attained when the dual SDP possesses a feasible set that is not a singleton.
\end{cor}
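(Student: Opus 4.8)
The plan is to obtain the corollary as a direct specialization of the S-lemma with an equality constraint (Theorem 6 of \cite{xia2016s}), so that almost all of the work consists of matching our objects to that theorem's hypotheses and confirming that \eqref{eqn:opt_2_dual} is precisely the dual SDP it produces. First I would write (NC1) in canonical form as $\min_x q_0(x)$ subject to $r(x)=0$, where $q_0(x) = \tfrac{1}{2}\|Dx-b\|_2^2 = \tfrac{1}{2}x^\intercal D^\intercal D x - b^\intercal D x + \tfrac{1}{2}b^\intercal b$ is a (convex) quadratic and $r$ is the single indefinite quadratic from \eqref{eqn:r_constraint}. The Lagrangian $L(x,\mu) = q_0(x) + \mu\,r(x)$ is again quadratic in $x$, and the condition $L(x,\mu)-s \geq 0$ for all $x$ is, after homogenizing, exactly the matrix inequality appearing in \eqref{eqn:opt_2_dual} once one matches the quadratic, linear, and constant coefficients (with the usual factor conventions on the off-diagonal block). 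Verifying this identification is routine bookkeeping, and it shows that \eqref{eqn:opt_2_dual} maximizes $s$ over precisely those $(\mu,s)$ for which $q_0(x)-s-(-\mu)\,r(x)\geq 0$ holds everywhere.

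With this in hand, the optimal value of (NC1) equals $\sup\{\,s : q_0(x)-s \geq 0 \text{ whenever } r(x)=0\,\}$, and the bracketed statement is exactly the antecedent of the equality-constrained S-lemma. The crucial hypothesis for that lemma is the regularity condition that $r$ changes sign, i.e.\ the existence of $x_1,x_2$ with $r(x_1)<0<r(x_2)$, which is assumed in the statement and plays the role of Slater's condition for a two-sided (equality) constraint. Granting this together with boundedness below of (NC1), Theorem 6 supplies the equivalence $\{q_0(x)-s\geq 0 \text{ on } r(x)=0\} \Longleftrightarrow \{\exists\,\mu : q_0(x)-s+\mu\,r(x)\geq 0 \;\forall x\}$, which yields a zero duality gap and attainment of the supremum in the dual, i.e.\ an optimal $(s^*,\mu^*)$ with $s^*$ equal to the infimum of (NC1). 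I would invoke Theorem 6 directly for this step rather than reprove the S-lemma.

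For the final clause on primal attainment, I would trace the recovery argument built into Theorem 6: when the optimal multiplier is not forced to be unique, equivalently when the feasible set of \eqref{eqn:opt_2_dual} is not a singleton, the optimal slack matrix $M(\mu^*,s^*)$ is singular, and one extracts from its nullspace a homogenized vector $\bigl(\begin{smallmatrix}x^*\\1\end{smallmatrix}\bigr)$ that simultaneously minimizes $L(\cdot,\mu^*)$ and satisfies $r(x^*)=0$, hence is optimal for (NC1) by complementary slackness. The main obstacle I anticipate is exactly this step: ensuring the non-singleton condition stated here corresponds to the precise attainment hypothesis of Theorem 6, and that the recovered nullspace vector can be normalized to have last coordinate $1$ rather than lying in the homogenizing hyperplane, so that it gives a genuine $x^* \in \reals^4$ satisfying the constraint. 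Once the general theorem's conclusions are transcribed to our data $(D,b,E,c)$, the corollary follows, and in particular any primal optimum with $x^*\succeq 0$ then certifies global optimality for the original problem \eqref{eqn:opt_1} as claimed preceding the statement.
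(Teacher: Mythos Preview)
Your proposal is correct and takes essentially the same approach as the paper: both obtain the corollary directly by invoking Theorem~6 of \cite{xia2016s}. The paper's proof is in fact the single line ``This follows immediately from~\cite[Theorem 6]{xia2016s},'' so your detailed verification that \eqref{eqn:opt_2_dual} is the dual SDP produced by that theorem and that the sign-change hypothesis on $r$ matches its regularity condition is more explicit than what the paper provides, but not a different route.
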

\begin{proof}
This follows immediately from~\cite[Theorem 6]{xia2016s}.
\end{proof}

\begin{rmk}
Such $x_1,x_2$ can easily be found by taking $x_1 = (\frac{1}{2}, \frac{1}{2}, 2, 2)$ and $x_2 = (\frac{1}{2}, \frac{1}{2}, 2, \frac{1}{2})$, yielding $r(x_1) = (\frac{1}{2})2 - 2 < 0 < (\frac{1}{2})2 - \frac{1}{2} = r(x_2)$.
\end{rmk}
We now aim to show that the feasible $\mu$ are not unique to obtain equivalency.

\begin{lemma} \label{lemma:eq}
If $D \in \reals^{m,n}$ with $m \geq n$ as defined in~\eqref{eqn:opt_2_dual} has full rank, then the formulations  \textup{(NC1)} and~\eqref{eqn:opt_2} are equivalent and the optimal solution is attained.
\end{lemma}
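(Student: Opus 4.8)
The plan is to run everything through Corollary~\ref{cor:one}, using the full-rank hypothesis at exactly one place: the attainment step. First I would check that the hypotheses of Corollary~\ref{cor:one} hold. The problem \textup{(NC1)}, namely minimizing $\tfrac12\|Dx-b\|_2^2$ subject to $r(x)=0$, is feasible (e.g.\ $x=0$ satisfies $r(x)=0$) and its objective is a sum of squares, hence bounded below by $0$. Together with the points exhibited in the Remark, which certify $r(x_1)<0<r(x_2)$, this lets me invoke Corollary~\ref{cor:one}: the dual \eqref{eqn:opt_2_dual} attains its optimum at some $(s^*,\mu^*)$ whose value equals the infimum of \textup{(NC1)}.

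Next I would establish equality of all three optimal values by a sandwich argument, avoiding any separate Slater verification. Any $x$ feasible for \textup{(NC1)} yields $(xx^\intercal,x)$ feasible for \eqref{eqn:opt_2} with identical objective, so \eqref{eqn:opt_2} is a relaxation and its value is at most $\inf\textup{(NC1)}$; weak SDP duality between the minimization \eqref{eqn:opt_2} and its dual \eqref{eqn:opt_2_dual} gives the reverse-flavoured bound. Chaining these with the previous paragraph yields
\[
\mathrm{val}\,\textup{(D)}\;\le\;\mathrm{val}\,\textup{(P)}\;\le\;\inf \textup{(NC1)}\;=\;\mathrm{val}\,\textup{(D)},
\]
forcing all three to coincide, which already gives equivalence at the level of optimal values.

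The remaining, and hardest, point is attainment, which by Corollary~\ref{cor:one} holds once the dual feasible set is not a singleton; this is the \emph{crux} and the only place the full-rank hypothesis is used. With $m\ge n$ and $D$ of full rank, $D^\intercal D\succ0$. A necessary condition for dual feasibility is that the leading block $\tfrac12 D^\intercal D+\mu E$ be positive semidefinite, and I would show the set of such $\mu$ is a nondegenerate interval: the map $\mu\mapsto\lambda_{\min}\!\left(\tfrac12 D^\intercal D+\mu E\right)$ is concave (minimum eigenvalue of a matrix affine in $\mu$) and strictly positive at $\mu=0$ because $D^\intercal D\succ0$, hence positive on an open neighbourhood of $0$; and since $E$ merely encodes the bilinear product $k_g g_*$ it is indefinite, so the interval is bounded with $0$ in its interior. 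For each such $\mu$ a Schur-complement argument supplies a feasible $s$, so infinitely many $\mu$ are feasible and the dual feasible set is not a singleton. Full rank is indispensable here: if $D^\intercal D$ were singular with two null directions on which $E$ took opposite signs, positivity of the leading block would pin $\mu=0$ and collapse the feasible set to a point.

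Finally I would close the loop to turn value-equivalence into problem-equivalence. Let $x^*$ be the minimizer of \textup{(NC1)} guaranteed by the previous step and set $X^*=x^*x^{*\intercal}$. Then $(X^*,x^*)$ is feasible for \eqref{eqn:opt_2}, since $X^*\succeq x^*x^{*\intercal}$ with equality and $\tr(EX^*)+c^\intercal x^*=r(x^*)=0$, and its objective equals $\tfrac12\|Dx^*-b\|_2^2=\inf \textup{(NC1)}=\mathrm{val}\,\textup{(P)}$; hence it is optimal for \eqref{eqn:opt_2}. Thus \eqref{eqn:opt_2} admits a rank-one optimizer whose vector part solves \textup{(NC1)}, so solving the convex problem solves the nonconvex one and both optima are attained. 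I expect the main obstacle to be the attainment step: translating the abstract ``feasible set not a singleton'' condition of Corollary~\ref{cor:one} into the concrete spectral statement that $\tfrac12 D^\intercal D+\mu E\succeq0$ holds on a nondegenerate $\mu$-interval, which is precisely what the full-rank hypothesis buys.
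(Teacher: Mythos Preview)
Your proposal is correct and follows essentially the same route as the paper: invoke Corollary~\ref{cor:one} and use the full-rank hypothesis to show the dual feasible set in \eqref{eqn:opt_2_dual} is not a singleton, via a perturbation argument on the leading block $\tfrac12 D^\intercal D+\mu E$. Your version is more explicit than the paper's---you spell out the sandwich $\mathrm{val}(\text{D})\le\mathrm{val}(\text{P})\le\inf(\text{NC1})=\mathrm{val}(\text{D})$ and construct the rank-one optimizer $(x^*x^{*\intercal},x^*)$ for \eqref{eqn:opt_2}, whereas the paper simply appeals to Corollary~\ref{cor:one} and a norm bound $\tfrac12 D^\intercal D+\tfrac{u}{\|E\|_2}E\succ0$ for small $u$---but the substance is the same.
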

\begin{proof}
First note that $D$ being full rank implies $D^\intercal D \succ 0$.
For sufficiently small $u \in \reals$, $\frac{1}{2}D^\intercal D + u I \succ 0$. For these $u$, $\frac{1}{2}D^\intercal D + \frac{u}{\lVert E\rVert_2} E \succ 0$ so the interior of $\{\mu \in \reals: \frac{1}{2}D^\intercal D + \mu E \succeq 0\}$ is nonempty. Since there exist $s \in \reals$ such that these $\mu$ are feasible for~\eqref{eqn:opt_2_dual}, and by the previous remarks, we can apply Corollary~\ref{cor:one} to obtain the desired result.
\end{proof}





\bibliographystyle{IEEEtran}
\bibliography{IEEEabrv,root}

\begin{thebibliography}{10}
\providecommand{\url}[1]{#1}
\csname url@rmstyle\endcsname
\providecommand{\newblock}{\relax}
\providecommand{\bibinfo}[2]{#2}
\providecommand\BIBentrySTDinterwordspacing{\spaceskip=0pt\relax}
\providecommand\BIBentryALTinterwordstretchfactor{4}
\providecommand\BIBentryALTinterwordspacing{\spaceskip=\fontdimen2\font plus
\BIBentryALTinterwordstretchfactor\fontdimen3\font minus
  \fontdimen4\font\relax}
\providecommand\BIBforeignlanguage[2]{{%
\expandafter\ifx\csname l@#1\endcsname\relax
\typeout{** WARNING: IEEEtran.bst: No hyphenation pattern has been}%
\typeout{** loaded for the language `#1'. Using the pattern for}%
\typeout{** the default language instead.}%
\else
\language=\csname l@#1\endcsname
\fi
#2}}

\bibitem{lawitzky2013interactive}
A.~Lawitzky, D.~Althoff, C.~F. Passenberg, G.~Tanzmeister, D.~Wollherr, and
  M.~Buss, ``Interactive scene prediction for automotive applications,'' in
  \emph{2013 IEEE Intelligent Vehicles Symposium (IV)}.\hskip 1em plus 0.5em
  minus 0.4em\relax IEEE, 2013, pp. 1028--1033.

\bibitem{lefevre2014survey}
S.~Lef{\`e}vre, D.~Vasquez, and C.~Laugier, ``A survey on motion prediction and
  risk assessment for intelligent vehicles,'' \emph{ROBOMECH journal}, vol.~1,
  no.~1, p.~1, 2014.

\bibitem{kim2017nn}
B.~Kim, C.~M. Kang, J.~Kim, S.~H. Lee, C.~C. Chung, and J.~W. Choi,
  ``Probabilistic vehicle trajectory prediction over occupancy grid map via
  recurrent neural network,'' in \emph{2017 IEEE 20th International Conference
  on Intelligent Transportation Systems (ITSC)}.\hskip 1em plus 0.5em minus
  0.4em\relax IEEE, 2017, pp. 399--404.

\bibitem{xin2018nn}
L.~Xin, P.~Wang, C.-Y. Chan, J.~Chen, S.~E. Li, and B.~Cheng, ``Intention-aware
  long horizon trajectory prediction of surrounding vehicles using dual lstm
  networks,'' in \emph{2018 IEEE 21st International Conference on Intelligent
  Transportation Systems (ITSC)}.\hskip 1em plus 0.5em minus 0.4em\relax IEEE,
  2018, pp. 1441--1446.

\bibitem{deo2018nn}
N.~Deo and M.~M. Trivedi, ``Convolutional social pooling for vehicle trajectory
  prediction,'' in \emph{Proceedings of the IEEE Conference on Computer Vision
  and Pattern Recognition Workshops}, 2018, pp. 1468--1476.

\bibitem{hu2018nn}
Y.~Hu, W.~Zhan, and M.~Tomizuka, ``Probabilistic prediction of vehicle semantic
  intention and motion,'' in \emph{2018 IEEE Intelligent Vehicles Symposium
  (IV)}.\hskip 1em plus 0.5em minus 0.4em\relax IEEE, 2018, pp. 307--313.

\bibitem{chandra2019traphic}
R.~Chandra, U.~Bhattacharya, A.~Bera, and D.~Manocha, ``Traphic: Trajectory
  prediction in dense and heterogeneous traffic using weighted interactions,''
  in \emph{Proceedings of the IEEE Conference on Computer Vision and Pattern
  Recognition (CVPR)}, 2019, pp. 8483--8492.

\bibitem{gupta2018socialgan}
A.~Gupta, J.~Johnson, L.~Fei-Fei, S.~Savarese, and A.~Alahi, ``Social gan:
  Socially acceptable trajectories with generative adversarial networks,'' in
  \emph{Proceedings of the IEEE Conference on Computer Vision and Pattern
  Recognition (CVPR)}, 2018, pp. 2255--2264.

\bibitem{ma2019trafficpredict}
Y.~Ma, X.~Zhu, S.~Zhang, R.~Yang, W.~Wang, and D.~Manocha, ``Trafficpredict:
  Trajectory prediction for heterogeneous traffic-agents,'' in
  \emph{Proceedings of the AAAI Conference on Artificial Intelligence},
  vol.~33, 2019, pp. 6120--6127.

\bibitem{zhao2019multi}
T.~Zhao, Y.~Xu, M.~Monfort, W.~Choi, C.~Baker, Y.~Zhao, Y.~Wang, and Y.~N. Wu,
  ``Multi-agent tensor fusion for contextual trajectory prediction,'' in
  \emph{Proceedings of the IEEE Conference on Computer Vision and Pattern
  Recognition (CVPR)}, 2019, pp. 12\,126--12\,134.

\bibitem{lee2017desire}
N.~Lee, W.~Choi, P.~Vernaza, C.~B. Choy, P.~H. Torr, and M.~Chandraker,
  ``Desire: Distant future prediction in dynamic scenes with interacting
  agents,'' in \emph{Proceedings of the IEEE Conference on Computer Vision and
  Pattern Recognition (CVPR)}, 2017, pp. 336--345.

\bibitem{ngsim}
\BIBentryALTinterwordspacing
{US Department of Transportation}. (2008) Ngsim - next generation simulation.
  Accessed on: 2019-06-30. [Online]. Available:
  \url{http://www.ngsim.fhwa.dot.gov/}
\BIBentrySTDinterwordspacing

\bibitem{wei2013auto}
J.~Wei, J.~M. Dolan, and B.~Litkouhi, ``Autonomous vehicle social behavior for
  highway entrance ramp management,'' in \emph{2013 IEEE Intelligent Vehicles
  Symposium (IV)}.\hskip 1em plus 0.5em minus 0.4em\relax IEEE, 2013, pp.
  201--207.

\bibitem{berthelot2011handling}
A.~Berthelot, A.~Tamke, T.~Dang, and G.~Breuel, ``Handling uncertainties in
  criticality assessment,'' in \emph{2011 IEEE Intelligent Vehicles Symposium
  (IV)}.\hskip 1em plus 0.5em minus 0.4em\relax IEEE, 2011, pp. 571--576.

\bibitem{wiest2012probabilistic}
J.~Wiest, M.~H{\"o}ffken, U.~Kre{\ss}el, and K.~Dietmayer, ``Probabilistic
  trajectory prediction with gaussian mixture models,'' in \emph{2012 IEEE
  Intelligent Vehicles Symposium (IV)}.\hskip 1em plus 0.5em minus 0.4em\relax
  IEEE, 2012, pp. 141--146.

\bibitem{kaempchen2004imm}
N.~Kaempchen, K.~Weiss, M.~Schaefer, and K.~C. Dietmayer, ``Imm object tracking
  for high dynamic driving maneuvers,'' in \emph{2004 IEEE Intelligent Vehicles
  Symposium (IV)}.\hskip 1em plus 0.5em minus 0.4em\relax IEEE, 2004, pp.
  825--830.

\bibitem{houenou2013vehicle}
A.~Houenou, P.~Bonnifait, V.~Cherfaoui, and W.~Yao, ``Vehicle trajectory
  prediction based on motion model and maneuver recognition,'' in \emph{2013
  IEEE/RSJ International Conference on Intelligent Robots and Systems
  (IROS)}.\hskip 1em plus 0.5em minus 0.4em\relax IEEE, 2013, pp. 4363--4369.

\bibitem{sarvi2007microsimulation}
M.~Sarvi and M.~Kuwahara, ``Microsimulation of freeway ramp merging processes
  under congested traffic conditions,'' \emph{IEEE Transactions on Intelligent
  Transportation Systems}, vol.~8, no.~3, pp. 470--479, 2007.

\bibitem{kondyli2011modeling}
A.~Kondyli and L.~Elefteriadou, ``Modeling driver behavior at freeway--ramp
  merges,'' \emph{Transportation Research Record}, vol. 2249, no.~1, pp.
  29--37, 2011.

\bibitem{bahram2016combined}
M.~Bahram, C.~Hubmann, A.~Lawitzky, M.~Aeberhard, and D.~Wollherr, ``A combined
  model-and learning-based framework for interaction-aware maneuver
  prediction,'' \emph{IEEE Transactions on Intelligent Transportation Systems},
  vol.~17, no.~6, pp. 1538--1550, 2016.

\bibitem{deo2018gmm}
N.~Deo, A.~Rangesh, and M.~M. Trivedi, ``How would surround vehicles move? a
  unified framework for maneuver classification and motion prediction,''
  \emph{IEEE Transactions on Intelligent Vehicles}, vol.~3, no.~2, pp.
  129--140, 2018.

\bibitem{dong2018smooth}
C.~Dong, J.~M. Dolan, and B.~Litkouhi, ``Smooth behavioral estimation for ramp
  merging control in autonomous driving,'' in \emph{2018 IEEE Intelligent
  Vehicles Symposium (IV)}.\hskip 1em plus 0.5em minus 0.4em\relax IEEE, 2018,
  pp. 1692--1697.

\bibitem{hubmann2018belief}
C.~Hubmann, J.~Schulz, G.~Xu, D.~Althoff, and C.~Stiller, ``A belief state
  planner for interactive merge maneuvers in congested traffic,'' in \emph{2018
  IEEE 21st International Conference on Intelligent Transportation Systems
  (ITSC)}.\hskip 1em plus 0.5em minus 0.4em\relax IEEE, 2018, pp. 1617--1624.

\bibitem{treiber2000congested}
M.~Treiber, A.~Hennecke, and D.~Helbing, ``Congested traffic states in
  empirical observations and microscopic simulations,'' \emph{Physical Review
  E}, vol.~62, no.~2, p. 1805, 2000.

\bibitem{cvxopt}
\BIBentryALTinterwordspacing
D.~J. Andersen, Martin~S. and L.~Vandenberghe. Cvxopt. Accessed on: 2019-08-14.
  [Online]. Available: \url{https://cvxopt.org/}
\BIBentrySTDinterwordspacing

\bibitem{scholler2020constant}
C.~Sch{\"o}ller, V.~Aravantinos, F.~Lay, and A.~Knoll, ``What the constant
  velocity model can teach us about pedestrian motion prediction,'' \emph{IEEE
  Robotics and Automation Letters}, 2020.

\bibitem{kuleshov2018accurate}
V.~Kuleshov, N.~Fenner, and S.~Ermon, ``Accurate uncertainties for deep
  learning using calibrated regression,'' in \emph{International Conference on
  Machine Learning (ICML)}, 2018, pp. 2796--2804.

\bibitem{xia2016s}
Y.~Xia, S.~Wang, and R.-L. Sheu, ``S-lemma with equality and its
  applications,'' \emph{Mathematical Programming}, vol. 156, no. 1-2, pp.
  513--547, 2016.

\end{thebibliography}

\end{document}